\pdfoutput=1
\documentclass[sigconf]{acmart}

\usepackage[linesnumbered,ruled,vlined]{algorithm2e}
\usepackage{caption}
\usepackage{subcaption}

\AtBeginDocument{%
  \providecommand\BibTeX{{%
    \normalfont B\kern-0.5em{\scshape i\kern-0.25em b}\kern-0.8em\TeX}}}


\copyrightyear{2023}
\acmYear{2023}
\setcopyright{acmlicensed}
\acmConference[GECCO '23]{Genetic and Evolutionary Computation Conference}{July 15--19, 2023}{Lisbon, Portugal}
\acmBooktitle{Genetic and Evolutionary Computation Conference (GECCO '23), July 15--19, 2023, Lisbon, Portugal}
\acmPrice{15.00}
\acmDOI{10.1145/3583131.3590512}
\acmISBN{979-8-4007-0119-1/23/07}



\begin{document}

\title{Rethinking Population-assisted Off-policy Reinforcement Learning}

\author{Bowen Zheng}
\email{bowen.zheng@protonmail.com}
\affiliation{%
  \department{Department of Computer Science and Engineering}
  \institution{Southern University of Science and Technology}
  \streetaddress{1088 Xueyuan Blvd}
  \city{Shenzhen}
  \state{Guangdong}
  \country{China}
  \postcode{518055}
}

\author{Ran Cheng}
\authornote{Corresponding Author}
\email{ranchengcn@gmail.com}
\affiliation{%
  \department{Department of Computer Science and Engineering}
  \institution{Southern University of Science and Technology}
  \streetaddress{1088 Xueyuan Blvd}
  \city{Shenzhen}
  \state{Guangdong}
  \country{China}
  \postcode{518055}
}

\renewcommand{\shortauthors}{Zheng and Cheng}


\begin{abstract}
  While off-policy reinforcement learning (RL) algorithms are sample efficient due to gradient-based updates and data reuse in the replay buffer, they struggle with convergence to local optima due to limited exploration. On the other hand, population-based algorithms offer a natural exploration strategy, but their heuristic black-box operators are inefficient. Recent algorithms have integrated these two methods, connecting them through a shared replay buffer. However, the effect of using diverse data from population optimization iterations on off-policy RL algorithms has not been thoroughly investigated. In this paper, we first analyze the use of off-policy RL algorithms in combination with population-based algorithms, showing that the use of population data could introduce an overlooked error and harm performance. To test this, we propose a uniform and scalable training design and conduct experiments on our tailored framework in robot locomotion tasks from the OpenAI gym. Our results substantiate that using population data in off-policy RL can cause instability during training and even degrade performance. To remedy this issue, we further propose a double replay buffer design that provides more on-policy data and show its effectiveness through experiments. Our results offer practical insights for training these hybrid methods.
\end{abstract}

\begin{CCSXML}
<ccs2012>
   <concept>
       <concept_id>10010147.10010257.10010258.10010261</concept_id>
       <concept_desc>Computing methodologies~Reinforcement learning</concept_desc>
       <concept_significance>500</concept_significance>
       </concept>
   <concept>
       <concept_id>10010147.10010257.10010293.10011809.10011814</concept_id>
       <concept_desc>Computing methodologies~Evolutionary robotics</concept_desc>
       <concept_significance>500</concept_significance>
       </concept>
 </ccs2012>
\end{CCSXML}

\ccsdesc[500]{Computing methodologies~Reinforcement learning}
\ccsdesc[500]{Computing methodologies~Evolutionary robotics}

\keywords{Evolutionary Reinforcement Learning, Neuroevolution, Off-policy Learning}


\maketitle

\section{Introduction}
Reinforcement learning (RL) algorithms have demonstrated remarkable success in a range of domains, including arcade games \cite{mnihPlayingAtariDeep2013,mnihHumanlevelControlDeep2015}, board games \cite{silverMasteringGameGo2016,silverMasteringChessShogi2017}, and robotic control \cite{lillicrapContinuousControlDeep2019,haarnojaSoftActorCriticOffPolicy2018,fujimotoAddressingFunctionApproximation2018}. The use of high-capacity artificial neural networks and gradient-based optimization methods has greatly contributed to this success. However, there are still challenges that hinder the use of these methods for more general tasks in real-life scenarios. For model-free RL, the first challenge is the instability of convergence and sensitivity to hyperparameters. The second challenge is the balance between exploration and exploitation, which is inherent in the bootstrapping learning pattern: the agent generates data from the environment, then updates its parameters based on that data, leading to the potential for unstable training and high sample complexity. The design of the reward function is also a critical factor, as it must effectively guide the learning process by providing meaningful signals.



Besides reinforcement learning, population-based methods such as evolutionary algorithms (EAs) are competitive alternatives for solving policy search tasks. Previous works included evolving the policy's topology and weights \cite{stanleyEfficientReinforcementLearning2002a}. Recent methods used evolution strategies (ESs) for direct policy search on weights and achieved similar or better results than gradient-based methods in arcade games and continuous control tasks \cite{salimansEvolutionStrategiesScalable2017,chrabaszczBackBasicsBenchmarking2018,liuTrustRegionEvolution2019,fuksEvolutionStrategyProgressive2019}. These methods use the agent's returns from the environment as the objective, providing a natural exploration strategy in the parameter space via the population. Moreover, the population's parallelization property also benefits training on the massively distributed framework \cite{baiLamarckianPlatformPushing2022,tangEvoJAXHardwareAcceleratedNeuroevolution2022,huangEvoXDistributedGPUaccelerated2023}. However, these methods often suffer from high sample complexity and struggle with slow convergence rates when optimizing neural networks due to their high dimensionality.

Consequently, combining these methods to form a hybrid approach is intriguing. One such pioneer work is \cite{khadkaEvolutionGuidedPolicyGradient2018}, which introduced the Evolutionary Reinforcement Learning (ERL) framework. The ERL framework employs an extra policy in addition to the genetically evolved population of policies. This additional policy, referred to as the target policy, is trained using off-policy RL methods with a shared replay buffer that stores recent transition data from all policies in the population and the target policy. The weights of the target policy guide the evolution, increasing the convergence speed, while the replay buffer facilitates information flow from the population to the RL method. This type of cooperative pattern is referred to as \emph{population-assisted off-policy RL}.

An essential aspect of ERL is that the population does not directly modify the parameters of the target policy; instead, it uses transitions from the population's trajectories to implicitly assist the off-policy RL method. However, the exact mechanism by which information is transferred from EA to RL, and its impact, is not yet fully understood and requires further exploration.
In our work, we have analyzed the formulation of off-policy RL algorithms commonly used in ERL frameworks and argue that the data from the population may introduce errors that harm the optimization process in the off-policy RL method due to a mismatch in distribution between the target policy and the population policies. These errors have been overlooked in previous studies.

We conduct an empirical investigation to assess how the population data influences the off-policy RL performance within the ERL framework. We propose a scalable training design that aligns training settings across algorithms, ensuring fair comparison. Then we develop a tailored ERL framework where its EA method generates trajectories with higher returns compared to the original ERL framework. And we design it in a minimalist way, reducing extraneous interference. By analyzing the action discrepancy between the target policy and population to performance on continuous robot locomotion tasks, we empirically confirm our hypothesis that errors from the off-policy population data can impair the target policy when the action discrepancy is considerable.



Furthermore, we propose a novel design to address the errors introduced by the population data. Our method utilizes two separate replay buffers to store transitions from the target actor and the population, respectively. The data is then mixed at a specified ratio to provide near-on-policy data for the off-policy RL optimization. Experiments on continuous robot locomotion tasks show that this modification makes the ERL framework more robust by reducing the impact of errors from the population data for the target actor.

Generally, our work evaluates the potential of off-policy RL in handling population data that deviates from the distribution of the target policy. We aim to shed light on this overlooked topic and provide insights into the design of reliable population-based algorithms.
The main contributions of this work are summarized as:
\begin{itemize}
    \item We analyze the off-policy RL method used in the population-assisted framework and observe a neglected flaw in the distribution mismatch between the population and the target actor.
    \item We propose a tailored ERL framework and test it under a uniform and scalable training design for different algorithms to empirically verify that this issue could deteriorate the target actor in the off-policy RL updates.
    \item By utilizing the correction effect of on-policy data, we propose a double replay buffer design for the ERL framework to remedy this issue.
\end{itemize}
\section{Background}
In this section, we present the notations and fundamental concepts of reinforcement learning, followed by an overview of the Evolutionary Reinforcement Learning (ERL) framework, a representative of population-assisted off-policy RL algorithms.

\subsection{Notation}
\label{sec:notation}
In RL, the problem is framed under the Markov Decision Process (MDP) assumption, which is defined as a tuple $(\mathcal{S},\mathcal{A},p,r)$. Here, $\mathcal{S}$ is the state space, $\mathcal{A}$ is the action space, $p: \mathcal{S} \times \mathcal{S} \times \mathcal{A} \mapsto [0,1]$ represents the transition probability between states and actions, and $r: \mathcal{S} \times \mathcal{A} \mapsto [r_{\text{min}},r_{\text{max}}]$ defines the reward received after each transition. The agent's policy, $\pi(a_t|s_t)$, maps states to actions. The goal of RL is to optimize the policy $\pi$ to maximize the discounted return along the trajectory $\tau$:
\begin{equation}
    \label{eq:rl_objective}
    \mathbb{E}_{\tau \sim p(\tau|\pi)} \left[ \sum_{t=0}^\infty \gamma^t r_t \right], 
\end{equation}
where $\gamma \in [0,1]$ is the discount factor, and $p(\tau|\pi)$ is the distribution of the trajectory generated by policy $\pi$. Note that the discount factor is used as a technical trick in episodic tasks and the ultimate goal is still to maximize the un-discounted return.

In contrast to supervised deep learning algorithms, model-free Deep Reinforcement Learning (DRL) has a unique learning pattern characterized by the use of current parameters to generate data for the next update. The \emph{behavior policy} is the policy used to collect data from the environment, while the \emph{target policy} is the policy that leverages this data for learning. If the target policy is the same as the behavior policy, the learning is referred to as \emph{on-policy} learning. On the other hand, \emph{off-policy} learning occurs when the behavior and target policies are different.

In the Actor-Critic setting, a popular learning pattern in DRL, the combination of policy-based methods \cite{williamsSimpleStatisticalGradientfollowing1992} and value-based methods \cite{watkinsQlearning1992,mnihPlayingAtariDeep2013} results in faster convergence. The \emph{actor} refers to the policy $\pi$, while the \emph{critic} refers to the estimation of either the state-value function or the action-value function under the actor. The action-value is defined as
\begin{equation}
    Q^\pi(s_t,a_t)=\mathbb{E}_{\tau \sim p(\tau|\pi)} \left[ \sum_{k=0}^\infty \gamma^k r_{t+k} |s_t,a_t\right].
\end{equation}

\subsection{Evolutionary Reinforcement Learning Framework}
\label{sec:erl_framework}
Traditional RL methods often struggle with diverse exploration and brittle convergence. As a result, population-based methods like Evolutionary Algorithms (EAs) have emerged as viable alternatives. These methods conduct the direct policy search based on the agent's returns from the environment and inherently include exploration strategies within the parameter space. Unlike gradient-based methods, EAs do not require explicit consideration of the internal temporal structure of episodes, allowing them to discard additional components of traditional RL methods, such as the Markov decision process (MDP) assumption and the discount factor. However, population-based methods are typically less efficient than gradient-based optimizers and have high sample complexity due to their inability to utilize internal information from episodes.


The Evolutionary Reinforcement Learning (ERL) framework incorporates the complementary benefits of Evolutionary Algorithms (EAs) and off-policy Reinforcement Learning (RL) methods for improved performance \cite{khadkaEvolutionGuidedPolicyGradient2018}. The framework integrates EA into off-policy RL by performing EA and off-policy RL optimization simultaneously and using a replay buffer to store informative experiences generated from the evolutionary stage and the RL stage. The replay buffer helps optimize the off-policy RL, while the weights of the RL agent boost the convergence speed of EA.

ERL maintains a population of actors along with an additional target actor and its corresponding critic. All actors are randomly initialized, and at each iteration, they are evaluated by performing episodes, where their fitness is calculated as the corresponding return. Experiences are recorded in a fixed-capacity replay buffer. A genetic algorithm is applied to the population based on their fitness, while the target actor collects episodes of experiences in the replay buffer. The target actor and critic are then updated using off-policy gradient-based methods, and the worst actor in the population is replaced with the target actor after several iterations to provide feedback from RL to the evolutionary population. A variant of ERL \cite{pourchotCEMRLCombiningEvolutionary2019} uses multiple independent target agents trained by off-policy RL and participating in the evolution. However, the framework still follows the general idea described above.

Although some methods \cite{colasGEPPGDecouplingExploration2018,nilssonPolicyGradientAssisted2021,baiEvolutionaryReinforcementLearning2023} also utilize a similar shared replay buffer strategy with population data, they introduce task-dependent goal space or behavioral characterizations and propose new objectives. In this paper, to streamline the discussion, we concentrate on methods that solely focus on maximizing the expected episodic return, like ERL.

\section{Effect of Off-policy Population Data}
\label{sec:effect_of_pop_data}

In this section, we investigate the underlying mechanism of Evolutionary Reinforcement Learning (ERL) frameworks and assess the impact of population experiences on their performance. We establish a uniform and scalable training design and develop a tailored ERL framework to eliminate extraneous factors and precisely measure the effect of population experiences under varying degrees of off-policy learning. Our approach's efficacy is demonstrated through experiments on robot locomotion tasks.

\subsection{Observation and Motivation}
\label{sec:obs}

The ERL framework combines the strengths of both Evolutionary Algorithms (EAs) and Reinforcement Learning (RL) through direct weight injection from RL to EA and indirect trajectory information from EA to RL. The direct weight injection enables the EA to leverage the weights of the high-performing target actor from RL as guidance, thus accelerating convergence and enhancing the overall population performance. This mechanism is relatively straightforward and contributes to the ERL framework's performance.

In contrast, the mechanism from EA to RL is less transparent, as it uses the collected experiences from EA to indirectly assist the RL updates instead of employing the weights from EA. This implicit information transfer benefits by reusing the learning pattern of off-policy RL algorithms without additional modification, as only the input data from the replay buffer differs. 
And the original work \cite{khadkaEvolutionGuidedPolicyGradient2018} claims that two factors let the experiences from EA assist the RL optimization. First, the diverse exploration through experiences from the population plays a crucial role in assisting RL, where the population of actors explores the parameter space, complementing the exploration of the target actor in action space. Second, the experiences from evolutionary iterations form an implicit prioritization for higher long-term payoff since EA directly optimize agents by their episodic return.
ERL and its variants \cite{khadkaEvolutionGuidedPolicyGradient2018,pourchotCEMRLCombiningEvolutionary2019,khadkaCollaborativeEvolutionaryReinforcement2019,bodnarProximalDistilledEvolutionary2020,tangGuidingEvolutionaryStrategies2021} follow this insight and train the target agent(s) directly through the replay buffer with data from the population and itself. However, they neglect that the success of the assistance from EA to RL is based on a subtle implicit assumption that off-policy RL algorithms are capable of taking advantage of these experiences. Although this assumption is crucial for the ERL framework, whether it holds is scarcely discussed.

We begin by conducting a thorough examination of prior research on the ERL framework and its variants. The majority of these studies \cite{khadkaEvolutionGuidedPolicyGradient2018,khadkaCollaborativeEvolutionaryReinforcement2019,bodnarProximalDistilledEvolutionary2020,leeEfficientAsynchronousMethod2021} employ off-policy deterministic actor-critic methods, particularly Deep Deterministic Policy Gradient (DDPG) \cite{lillicrapContinuousControlDeep2019} or Twin Delayed Deep Deterministic policy gradient (TD3) \cite{fujimotoAddressingFunctionApproximation2018}.
On continuous action spaces, the use of deterministic policies is preferred for their increased sample efficiency. Deterministic policies output a single, deterministic action given a state, rather than a distribution of actions. The Deterministic Policy Gradient (DPG) method \cite{silverDeterministicPolicyGradient2014} outlines the gradient calculation for a deterministic policy $\mu_\theta(s)$:
\begin{equation}
    \nabla_\theta J(\mu_\theta(s)) \doteq \mathbb{E}_{s \sim d_\mu} \left[ \nabla_\theta \mu_\theta(s) \nabla_a Q^\mu(s,a)|_{a=\mu_\theta(s)}  \right],
    \label{eq:onpolicy_dpg}
\end{equation}
\sloppy{where $Q^\mu(s,a)$ is the true action-value function and $d_\mu$ is the normalized form of the discounted state visitation frequency $\sum_{t=0}^{\infty} \gamma^t P\left(\\s_t=s|\mu\right)$. The off-policy version of DPG provides an approximation to the above gradient calculation: }
\begin{equation}
    \nabla_\theta J(\mu_\theta) \approx \mathbb{E}_{s \sim d_b} \left[ \nabla_\theta \mu_\theta(s) \nabla_a Q^\mu(s,a)|_{a=\mu_\theta(s)}  \right],
    \label{eq:offpolicy_dpg}
\end{equation}
where $d_b(s)$ is the marginal state distribution of the behavior policy $b(s)$. This approximation becomes accurate when the state distribution between the target policy and behavior policy is approximately stationary, i.e. $d_\mu(s) \approx d_b(s)$.
The true action-value function $Q^\mu$ can be estimated using a parametric model $Q_\phi$, updated by the Bellman operator $\mathcal{B}^\mu$:
\begin{equation}
    \begin{aligned}
        J(Q_\phi) =  \mathbb{E}_{(s,a) \sim d} \left[ Q_\phi(s,a) - \mathcal{B}^\mu Q_{\phi} \right]^2 \\
        \mathcal{B}^\mu Q_{\phi}=r+\gamma \mathbb{E}_{p(s'|s,a)} \left[ Q_{\phi}(s',\mu(s')) \right],
    \end{aligned}
    \label{eq:offpolicy_q_update}
\end{equation} 
where $d$ is some state-action distribution.


As formulated above, the off-policy methods utilize a replay buffer $\mathcal{D}$ that stores experiences $(s_t, a_t, r_t, s_{t+1})$ from previous iterations. Batches of data are then sampled from the replay buffer to update the actor $\mu_\theta$ and critic $Q_\phi$. The distribution $d_b(s)$ and $d(s,a)$ becomes the sampling distribution from $\mathcal{D}$, which we denote as $d_\mathcal{D}$. For simplicity, we use $d_\mathcal{D}(s)$ to represent the marginal state distribution, which is calculated as $\mathbb{E}_{a \sim \mathcal{D}}[d_\mathcal{D}(s,a)]$.

However, these off-policy methods were initially designed for situations where the policy explores with small perturbations on actions, and the replay buffer stores recent experiences from previously updated policies. Consequently, the data sampled from the replay buffer is often highly correlated with the current policy, and the distribution $d_{\mathcal{D}}$ is nearly equal to the corresponding on-policy distribution. In the case of ERL frameworks, however, the off-policy RL method must cope with external experiences generated by actors in the population, which can significantly diverge from the target actor during the evolutionary process.

\begin{proposition}
	\label{prop:mixed_policy_gradient}
	Mixing off-policy data into the policy gradient with the ratio $\alpha$ will changes the deterministic policy gradient from $\mathbb{E}_{s \sim d_\mu} \left[ \nabla_\theta Q^\mu(s, \mu_{\theta}(s))  \right]$ to $\mathbb{E}_{s \sim d_\mu} \left[ \nabla_\theta Q^\alpha(s,\mu_{\theta}(s)) \right]$, where $\rho(s)=\frac{d_b(s)}{d_\mu(s)}$ and
    \begin{equation}
        \label{eq:mixed_Q}
        Q^\alpha(s,a)=Q^\mu(s,a)+\alpha(\rho(s)-1)Q^\mu(s,a).
    \end{equation}
\end{proposition}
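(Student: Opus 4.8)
The plan is to interpret ``mixing off-policy data with ratio $\alpha$'' as drawing states from the mixture distribution $d_{\mathrm{mix}}(s) = (1-\alpha)\,d_\mu(s) + \alpha\,d_b(s)$, so that the estimated gradient becomes the convex combination
\begin{equation}
    g = (1-\alpha)\,\mathbb{E}_{s \sim d_\mu}\!\left[\nabla_\theta Q^\mu(s,\mu_\theta(s))\right] + \alpha\,\mathbb{E}_{s \sim d_b}\!\left[\nabla_\theta Q^\mu(s,\mu_\theta(s))\right],
\end{equation}
where I use the compact notation $\nabla_\theta Q^\mu(s,\mu_\theta(s)) = \nabla_\theta \mu_\theta(s)\,\nabla_a Q^\mu(s,a)|_{a=\mu_\theta(s)}$ of Eq.~\eqref{eq:onpolicy_dpg}. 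The first term is the true on-policy DPG and the second is the off-policy form of Eq.~\eqref{eq:offpolicy_dpg}.

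First I would apply a change of measure to the off-policy term. Since $\rho(s)=d_b(s)/d_\mu(s)$, importance sampling gives $\mathbb{E}_{s \sim d_b}[f(s)] = \mathbb{E}_{s \sim d_\mu}[\rho(s)\,f(s)]$ for any integrable $f$. Rewriting the second term this way and merging it with the first collapses everything into a single expectation under $d_\mu$:
\begin{equation}
    g = \mathbb{E}_{s \sim d_\mu}\!\left[\bigl(1 + \alpha(\rho(s)-1)\bigr)\,\nabla_\theta Q^\mu(s,\mu_\theta(s))\right],
\end{equation}
using $(1-\alpha)+\alpha\rho(s) = 1+\alpha(\rho(s)-1)$.

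The crux is then to absorb the scalar weight $w(s) = 1 + \alpha(\rho(s)-1)$ into a redefined action-value function. The key observation is that $w(s)$ depends only on the state and not on the action. Because $\theta$ enters $Q^\mu(s,\mu_\theta(s))$ solely through the action argument $a=\mu_\theta(s)$, differentiating the product $w(s)\,Q^\mu(s,a)$ with respect to $a$ — and hence, by the chain rule, with respect to $\theta$ — treats $w(s)$ as a constant multiplier. Defining $Q^\alpha(s,a)=w(s)\,Q^\mu(s,a)=Q^\mu(s,a)+\alpha(\rho(s)-1)Q^\mu(s,a)$ exactly as in Eq.~\eqref{eq:mixed_Q}, we therefore have $\nabla_a Q^\alpha(s,a)=w(s)\,\nabla_a Q^\mu(s,a)$ and consequently $\nabla_\theta Q^\alpha(s,\mu_\theta(s))=w(s)\,\nabla_\theta Q^\mu(s,\mu_\theta(s))$. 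Substituting back gives $g=\mathbb{E}_{s \sim d_\mu}[\nabla_\theta Q^\alpha(s,\mu_\theta(s))]$, which is the claim.

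The main obstacle to watch is precisely this final absorption: it is legitimate only because $\rho(s)$, and thus $w(s)$, carries no action dependence, so it commutes with the action-gradient that defines the deterministic policy gradient. Were the ratio to depend on $a$, one could not pull it through $\nabla_a$ and the clean reinterpretation as a perturbed value function $Q^\alpha$ would break down. I would also record the supporting assumption that $d_\mu(s)>0$ wherever $d_b(s)>0$, so that $\rho(s)$ and the importance-sampling identity are well defined.
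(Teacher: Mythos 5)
Your proposal is correct and follows essentially the same route as the paper's proof: both rewrite the mixture $\hat{d}(s)=(1-\alpha)d_\mu(s)+\alpha d_b(s)$ as a single expectation under $d_\mu$ weighted by $(1-\alpha)+\alpha\rho(s)=1+\alpha(\rho(s)-1)$, and then absorb that state-only scalar into $Q^\alpha$ because it commutes with $\nabla_a$. Your explicit justification of that absorption step and the support condition $d_\mu(s)>0$ where $d_b(s)>0$ are points the paper leaves implicit, but the argument is the same.
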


\begin{proof}
	See Appendix \ref{appendix:props}.
\end{proof}

In ERL frameworks, the actor updated by the off-policy methods can be biased towards unknown directions in non-tabular cases if the mixed data in the replay buffer leads to a mismatch between $d_\mu(s)$ and $d_\mathcal{D}(s)$. Proposition~\ref{prop:mixed_policy_gradient} indicates that mixing off-policy data in a ratio $\alpha$ is equivalent to adding a regularization term $(\rho(s)-1)Q^\mu(s,a)$ to the action-value estimate with weight $\alpha$ in the policy gradient, which can result in biased training. This regularization term becomes zero only when the population data distribution is close to $d_\mu(s)$. According to \cite{zimmerExploitingSignAdvantage2019}, under certain assumptions, the difference in the marginal state distribution of the behavior policy and target policy $|d_\mu(s) - d_b(s)|$ for a given state $s$ is bounded by $\max_{s \in \mathcal{S}} \| \mu(s) - b(s) \|_2$. This suggests that when the difference in actions between the target actor and individuals in the population is small, the mismatch in the marginal state distribution is reduced, leading to lesser error from the population data.


For the critic updates, although the off-policy formula in \eqref{eq:offpolicy_q_update} permits using arbitrary state-action pair distribution $d$, the critic can still be inaccurate in practice, exacerbating subsequent updates of the target actor. With function approximation, the mean square loss under $d_\mu(s,a)$ or $d_\mathcal{D}(s,a)$ has different optimization preferences for the critic. Additionally, the Bellman operator is approximated based on the data from the replay buffer, where the infinite state-action visitation assumption may not hold, and thus, the distribution mismatch can lead to incorrect updates. Theoretical works in \cite{suttonPolicyGradientMethods2000,silverDeterministicPolicyGradient2014,nachumSmoothedActionValue2018,sinhaExperienceReplayLikelihoodfree2022} recommend using a near-on-policy distribution for optimal performance for $Q$ functions, particularly in tasks with continuous action space.


As a result, we believe that employing these off-policy RL algorithms with experiences from the population could potentially impair performance due to the experiences distribution mismatch between the population policies and the target policy. However, the diverse experiences of the population also provide exploration benefits by expanding the agent's perception of the environment and prioritizing areas with high long-term returns through evolutionary pressure. When using the ERL replay buffer with mixed data, there is an implicit and delicate balance between these benefits and errors. This trade-off is challenging to formulate in equations and involves many uncertain factors, so we investigate it by conducting thorough experiments in the following sections.




\subsection{Uniform Scalable Training Design for Fair Comparison}
\label{sec:training_design}

In order to understand the impact of population experiences on off-policy reinforcement learning, it is essential to conduct fair comparisons between different algorithms using a consistent training pattern. Previous studies (e.g., \cite{khadkaEvolutionGuidedPolicyGradient2018,khadkaCollaborativeEvolutionaryReinforcement2019,bodnarProximalDistilledEvolutionary2020,leeEfficientAsynchronousMethod2021}) have compared RL and ERL algorithms with a varying balance between exploration and exploitation. In each iteration, the RL algorithm interacts with the environment for one step, adds the experience to the replay buffer, and then updates the policy by sampling a batch of experiences from the replay buffer. On the other hand, the ERL algorithm collects episodes from the population and the target actor in each iteration. It counts the total number of timesteps collected in that iteration and performs the same number of gradient updates. Even though the ERL algorithm employs the same gradient-based method as RL, its sampling and update intervals are significantly different. For example, if the ERL algorithm has a population size of 10, and each actor collects an episode of 1000 timesteps, the total number of gradient updates in each iteration would be $(10 + 1) \times 1000 = 11000$. This training design makes it challenging to determine whether the performance difference is due to the population or the different sampling and update intervals.

To fairly compare the impact of population experiences on off-policy RL, we propose a uniform training design. Our framework, shown in Fig.~\ref{fig:distributed_ERL}, allows for scalable implementation and ensures the equal comparison of different algorithms. We use remote rollout workers to collect trajectories from actors in parallel and store them in local replay buffers for RL updates. For ERL algorithms, we use a single rollout worker for the target actor and perform the same number of gradient updates as the number of timesteps collected from the target actor. For RL algorithms, we deploy an equal number of rollout workers (equal to the population size + 1) for the target actor and perform the same number of gradient updates as the number of timesteps collected from the first rollout worker. These workers are synchronized with the latest parameters of the target actor. Thus, we align different types of algorithms, and compared to RL methods, ERL methods only include an additional evolution procedure in every iteration. The experiment terminates when a certain number of RL training steps (i.e., number of gradient updates) is reached.

\begin{figure}[t]
	\centering
	\includegraphics[width=0.8\linewidth]{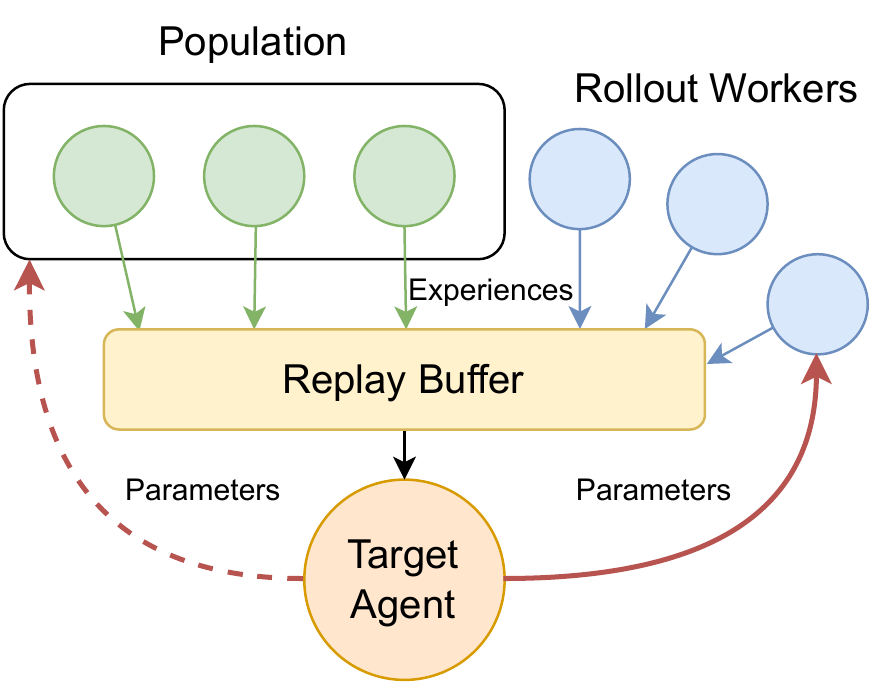}
	\caption{The uniform scalable training design for ERL and RL algorithms. Green circles represent rollout workers for the population, where each worker involves one actor in the population. Blue circles represent rollout workers for clones of the target actor.}
    \label{fig:distributed_ERL}
\end{figure}

In this way, we control different types of algorithms running at a similar exploration and exploitation trade-off level, and metrics over training steps can be reasonably aligned. In addition, we decouple the training steps from the EA part. Now, the number of training steps at each iteration is not directly relevant to the population size and the episode length of each individual anymore. Lastly, compared to using sampled timesteps as termination conditions in previous designs, which resulted in only hundreds of evolution iterations during training, our design also permits significantly more evolution iterations, which will unleash the potential of the population. Although we acknowledge this would collect more timesteps, the scalable training design dramatically amortizes the sampling time, fully utilizing ERL's parallel nature, and is an acceptable cost for our later simulation tasks in practice. We note that the choice of not using the total sampled timesteps as the horizontal coordinate has also been applied to other hybrid methods \cite{nilssonPolicyGradientAssisted2021,marchesiniGeneticSoftUpdates2021}.

\subsection{Tailored Evolutionary Reinforcement Learning Framework}
\label{sec:tailored_ERL}

With the aforementioned training design, we evaluate the performance of the original ERL. The EA part of the original ERL is a Genetic Algorithm (GA) with elitism, using Gaussian noise-based mutation and n-point crossover as variation operators on policy weights. We follow the official implementation\footnote{\url{https://github.com/ShawK91/Evolutionary-Reinforcement-Learning/tree/neurips_paper_2018}} of GA and utilize the same hyperparameter values. To address the critic overestimation issue in the RL part, we substitute the off-policy RL algorithm DDPG with the more recent TD3 method.

For clarity, we denote the ERL algorithm with the original GA as \emph{ERL-GA} and the training with the TD3 algorithm as \emph{no-pop}. The results of these two training methods are shown in Fig.~\ref{fig:erl_ga_reward}, indicating that \emph{no-pop} outperforms \emph{ERL-GA} on the target actor. Furthermore, we plot the average fitness across different trials and select one trial for each task to illustrate the fitness distribution around 1M, 2M and 3M training steps. As demonstrated in Fig.~\ref{fig:erl_ga_fitness}, many experiences gathered from the population originate from trajectories with low returns. The double-peak fitness distribution implies that, through the GA, numerous actors in the population become less adept compared to the target actor. Consequently, utilizing these low-return experiences may decelerate the convergence speed of the target actor and lead it towards a suboptimal solution. A further discussion can be found in Appendix~\ref{appendix:ga}.

\begin{figure}[h]
    \centering 
    \begin{subfigure}[t]{0.49\linewidth}
       \centering
       \includegraphics[width=\linewidth]{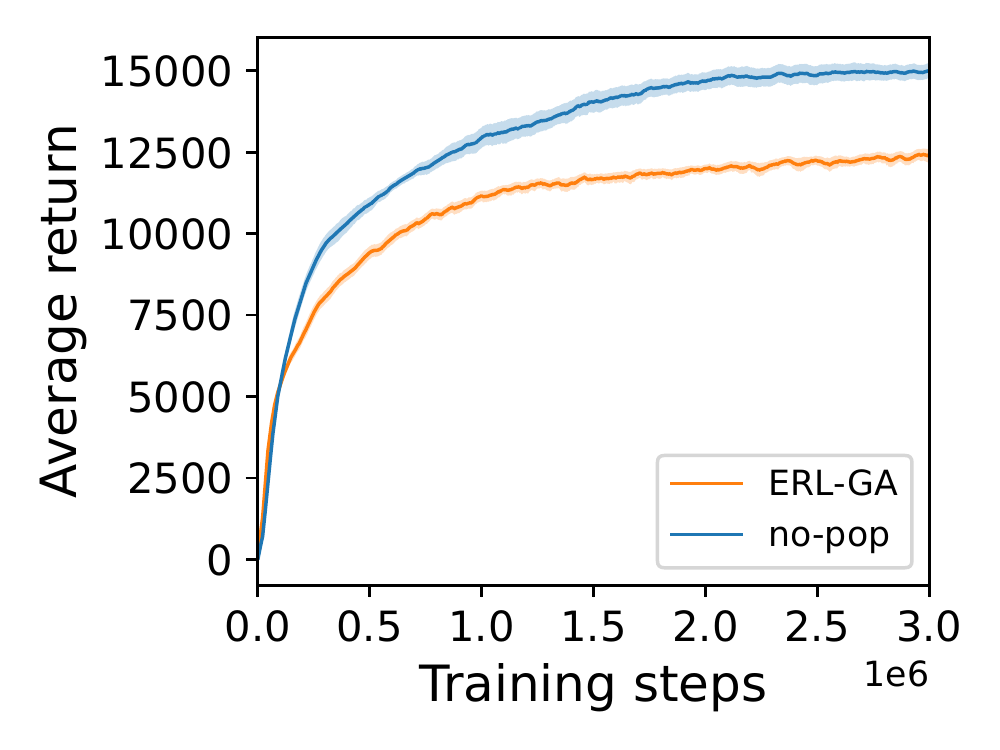} 
       \caption{HalfCheetah-v3}
    \end{subfigure}
    \begin{subfigure}[t]{0.49\linewidth}
       \centering 
       \includegraphics[width=\linewidth]{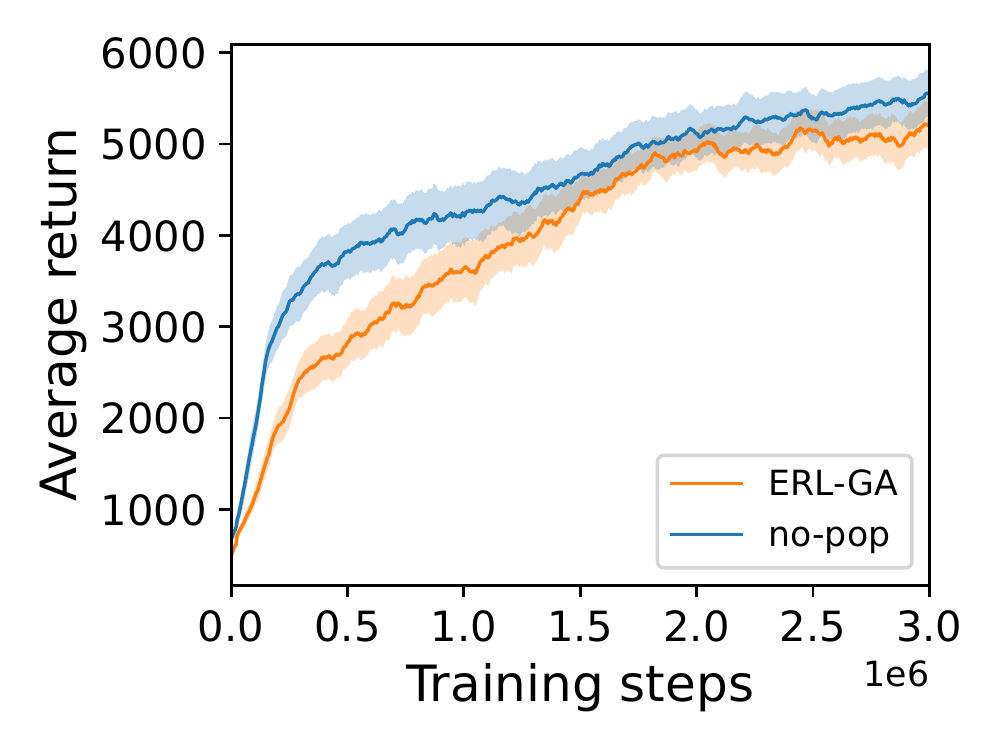} 
       \caption{Ant-v3}
    \end{subfigure}
    \caption{Learning curves for the original ERL (\emph{ERL-GA}) and parallel TD3 (\emph{no-pop}). The average returns of the target actor are aligned with its training steps.}
    \label{fig:erl_ga_reward}
\end{figure}


Hence, to examine the impact of population data, we seek an ideal Evolutionary Algorithm (EA) capable of generating high-quality trajectories from all actors in the population, eliminating the influence of poor population data, and allowing us to focus on the distribution mismatch error discussed in Section~\ref{sec:obs}. Moreover, in contrast to the GA method used in \emph{ERL-GA}, which involves numerous hyperparameters, we aim for a minimalist EA to reduce irrelevant complexity.

\begin{figure}[!t]
    \centering
    \begin{subfigure}[t]{\linewidth}
       \centering
       \includegraphics[width=\linewidth]{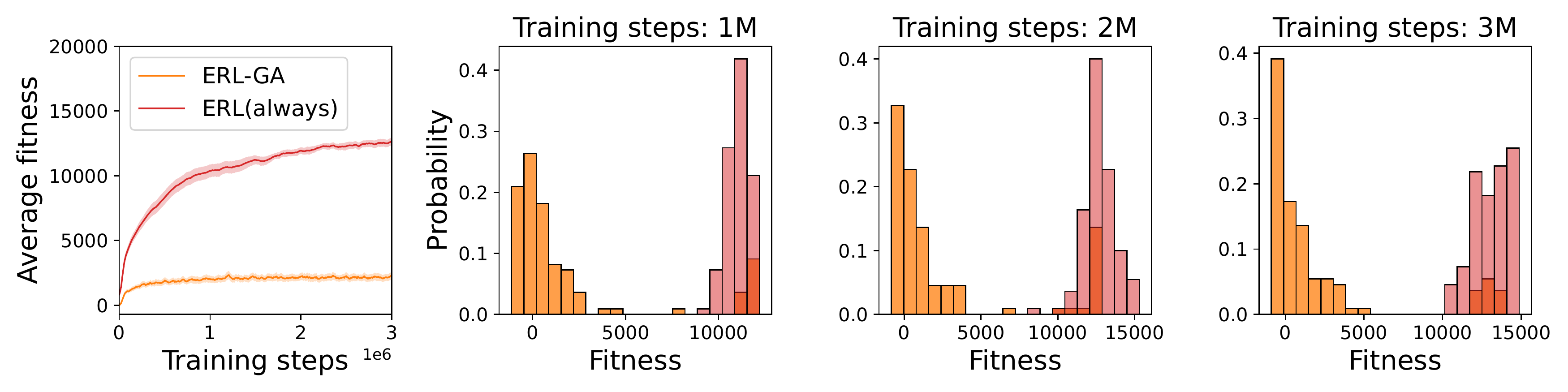} 
       \caption{HalfCheetah-v3}
    \end{subfigure}
    \begin{subfigure}[t]{\linewidth}
       \centering 
       \includegraphics[width=\linewidth]{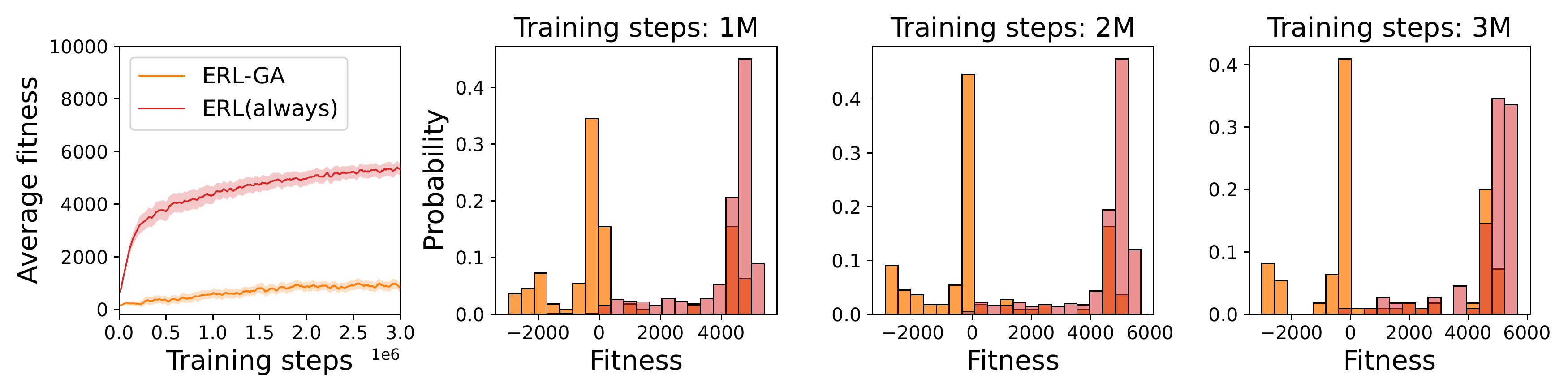} 
       \caption{Ant-v3}
    \end{subfigure}
    \caption{Training results on the original ERL (\emph{ERL-GA}) and our tailored ERL with \emph{always} strategy (\emph{ERL (always)}). The leftmost plots show the average fitness during the training. The other plots on the right represent the fitness distribution around 1M, 2M, and 3M training steps. }
    \label{fig:erl_ga_fitness}
\end{figure}

As a result, we introduce a tailored ERL framework that addresses the limitations of the previous GA method. This framework streamlines the training process and solves the issues found in \emph{ERL-GA}. It features an efficient EA that all population actors remain comparable to the target actor during evolution, thus high-return trajectories of the population are collected and stored in the replay buffer, as illustrated in Fig.~\ref{fig:erl_ga_fitness}. In addition, through the new EA, we can control the off-policy degree of the population, which is helpful to determine how different extents of distribution mismatch influence the target actor's updates.


\begin{algorithm}[b]
    \SetKwFunction{Fn}{Evaluate}
    \KwIn{initial policy parameters $\theta$, Q-function parameters $\phi$, replay buffers $\mathcal{D}$, ES hyperparameters: $(N, K, \sigma)$ }
    \KwOut{$\theta, \theta_{\text{pop}}$}

    Initialize population mean: $\theta_{\text{pop}}=\theta$\;
    
    \For{$iteration = 1,2,...$}{
        \For{$i \gets 1,...,N$}{
            Sample noise: $\epsilon_i \sim \mathcal{N}(0,I)$\;
            $\theta_i = \theta_{\text{pop}} + \sigma * \epsilon_i$\;
            $f_i \gets$ \Fn{$b_{\theta_i}$, $\mathcal{D}$}\;
        }
        $f_{\text{target}} \gets$ \Fn{$\pi_\theta$, $\mathcal{D}$}\;
        $\epsilon_{\text{target}} = (\theta - \theta_{\text{pop}}) / \sigma$\;

        \tcp{EA update}
        Update $\theta_{\text{pop}}$ by $\sigma, (\epsilon_i,f_i)_{i=1...N}$ and $(\epsilon_{\text{target}},f_{\text{target}})$\;

        \tcp{off-policy RL update}
        Update $\theta,\phi$ by off-policy RL with $\mathcal{D}$\;
    }
    \caption{The Tailored Evolutionary Reinforcement Learning}
    \label{algo:ERL}
\end{algorithm}

\begin{figure*}[t]
    \centering 
    \begin{subfigure}{0.24\linewidth}
       \centering
       \includegraphics[width=\linewidth]{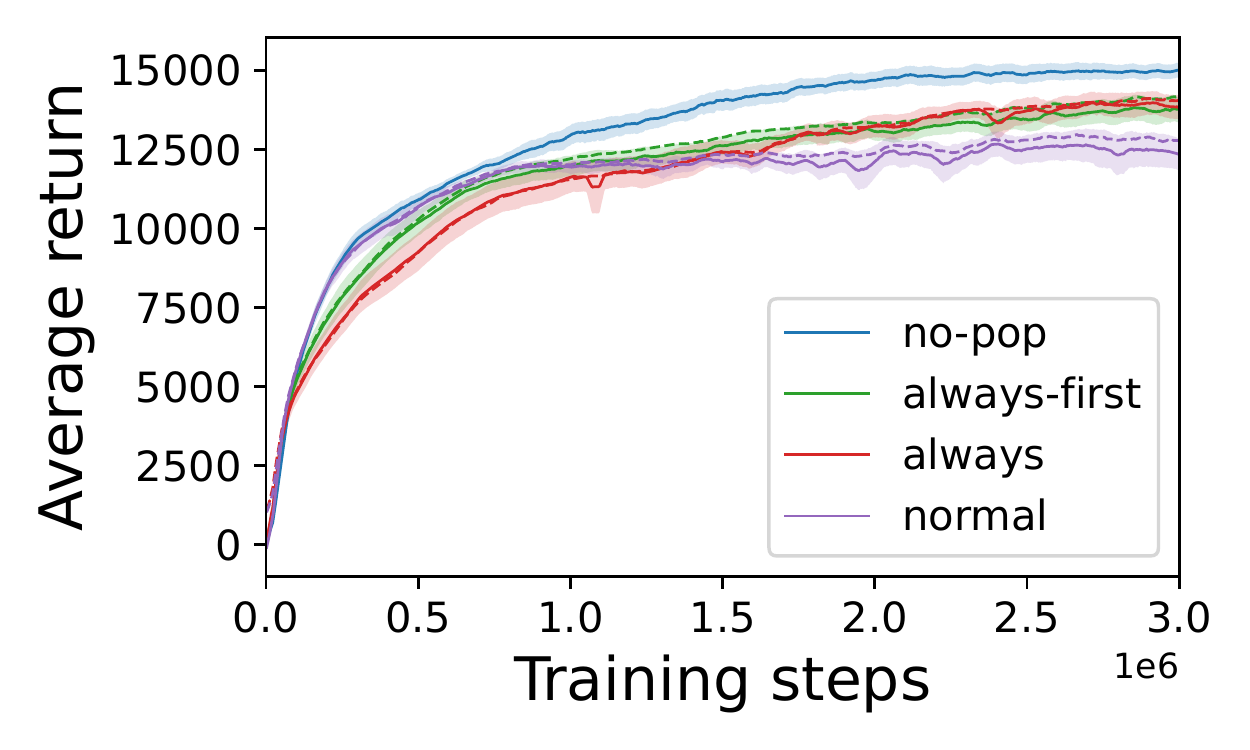} 
       \caption{HalfCheetah-v3}
    \end{subfigure}
    \begin{subfigure}{0.24\linewidth}
       \centering 
       \includegraphics[width=\linewidth]{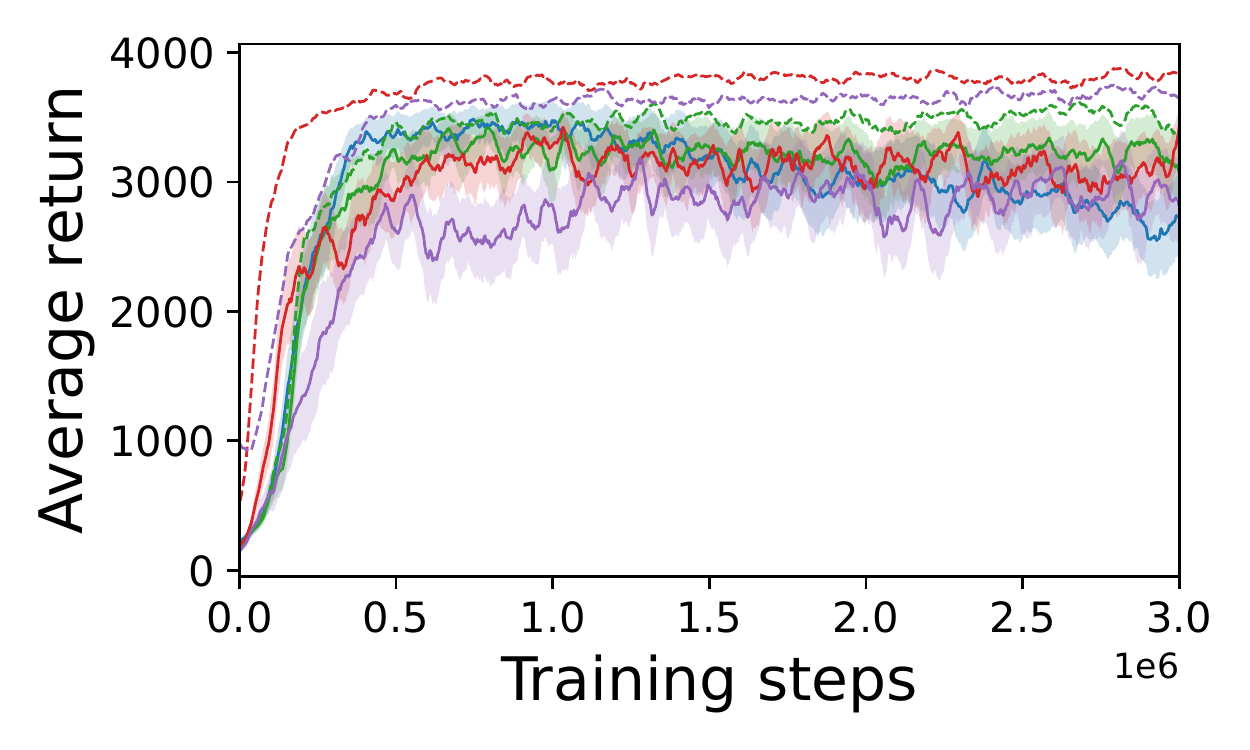} 
       \caption{Hopper-v3}
    \end{subfigure}
    \begin{subfigure}{0.24\linewidth}
        \centering 
        \includegraphics[width=\linewidth]{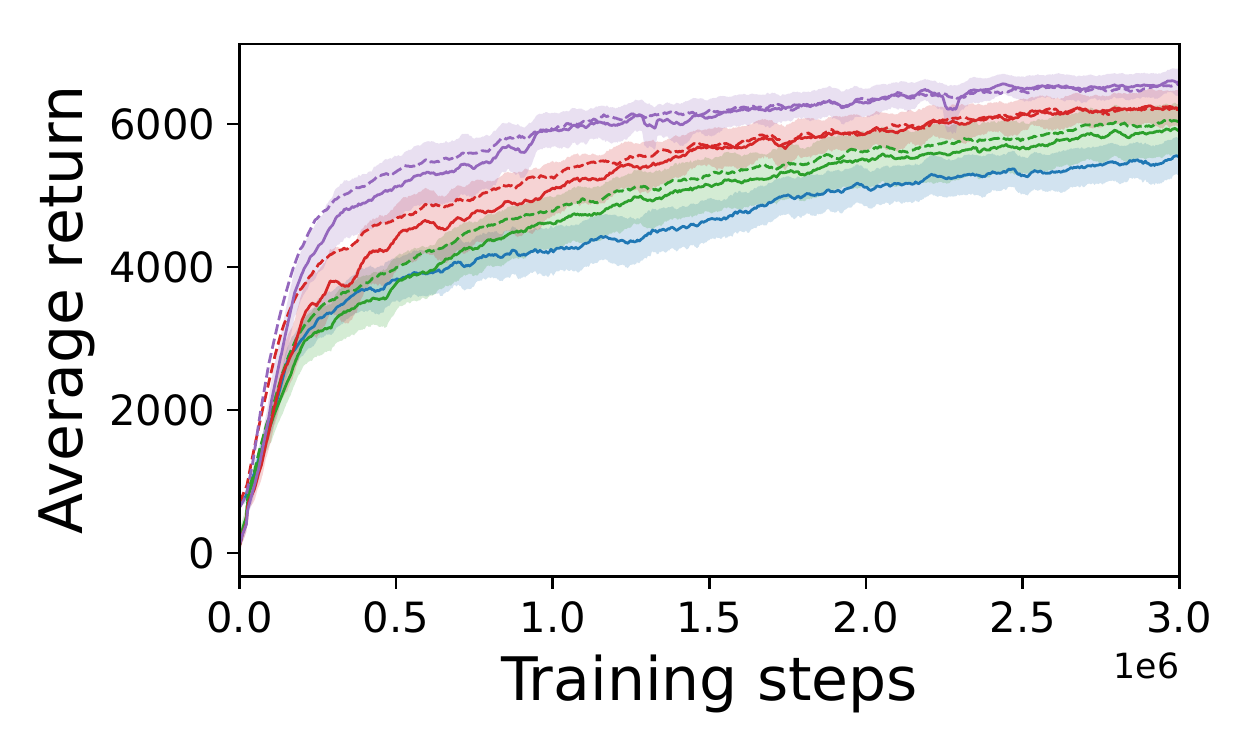} 
        \caption{Ant-v3}
     \end{subfigure}
     \begin{subfigure}{0.24\linewidth}
        \centering 
        \includegraphics[width=\linewidth]{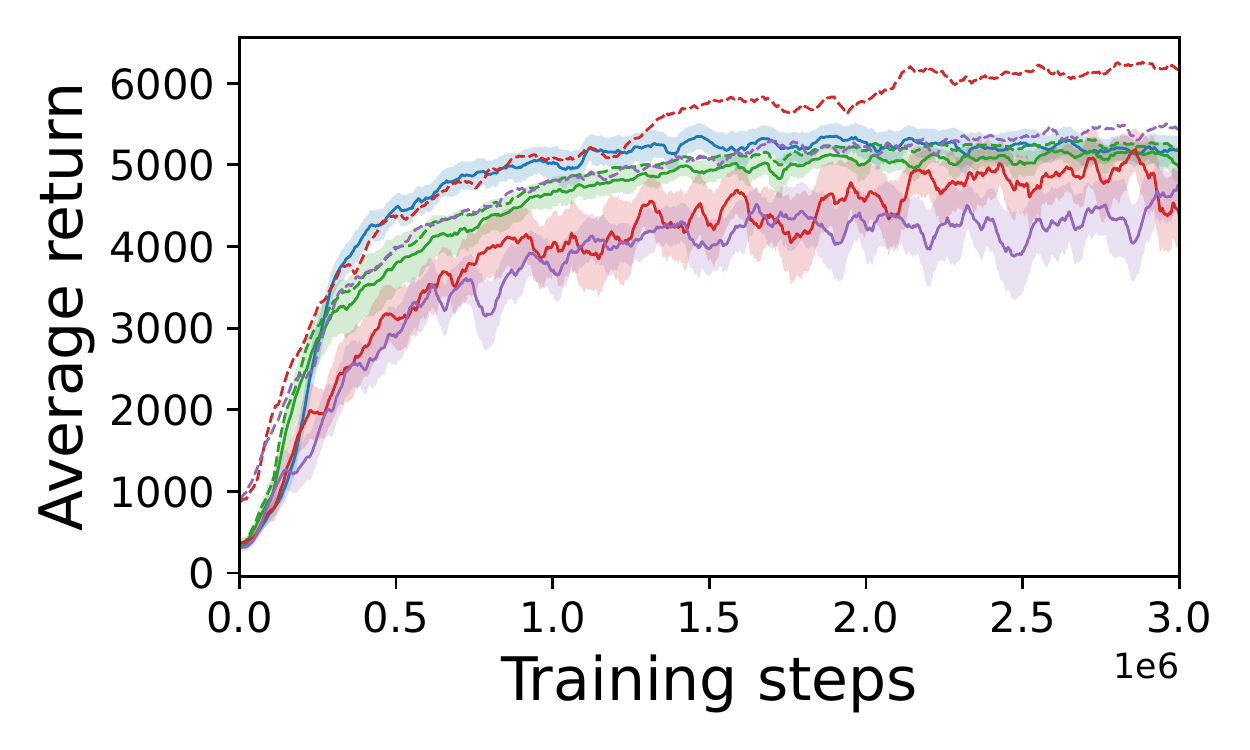} 
        \caption{Walker2d-v3}
     \end{subfigure}
    \caption{Learning curves on Mujoco locomotion tasks. For ERL algorithms (always-first, always, normal), the average returns of the population mean are drawn by dash lines with the same corresponding color.}
    \label{fig:erl_results}
\end{figure*}

We replace the original GA method with an ES method based on \cite{chrabaszczBackBasicsBenchmarking2018}. ES methods have several advantages in the field of RL: first, each individual is generated by simply adding Gaussian noise, which has been proven effective for neural networks \cite{salimansEvolutionStrategiesScalable2017,chrabaszczBackBasicsBenchmarking2018}; second, their population update methods are more stable for noisy fitness evaluation. Furthermore, our selected evolution method has minimal hyperparameters and no extra tricks, reducing unnecessary interference in our investigation.

We present the pseudo-code for our tailored ERL in Algorithm~\ref{algo:ERL}. Each iteration begins by generating $N$ individuals from an isotropic Gaussian distribution with mean $\theta_{\text{pop}}$ and fixed standard deviation $\sigma$. We use $\epsilon_{\text{target}} = (\theta - \theta_{\text{pop}}) / \sigma$ as a fake noise for the target actor.
In each iteration, each individual in the population is evaluated in a single episode, where the corresponding return is used as its fitness. The target actor is also evaluated in an episode to determine its fitness $f_{\text{target}}$. Then all experiences are stored in the replay buffer.

And we design three strategies to integrate our ES into the ERL framework, replacing the original process from RL to EA. After fitnesses are calculated, we update the population according to the following equation:
\begin{equation}
    \theta_{\text{pop}} \gets \theta_{\text{pop}}+\sigma \cdot \sum_{j=1}^{\hat{K}} (w_j \cdot \epsilon_j),
    \label{eq:es_update}
\end{equation}
where $\hat{K}$ is the number of parents, and $w_i = \frac{\log(\hat{K}+0.5)-\log(i)}{\sum_{j=1}^{\hat{K}}\log(\hat{K}+0.5)-\log(i)}$ are the recombination weights \cite{rudolph1997convergence} determined by the order of parents. Then the selection of parents and their order is determined by one of the strategies:
\begin{itemize}
    \item \textbf{normal}: directly sort individuals in the population and the target actor according to their fitnesses, and select the top $K+1$ actors as parents.
    \item \textbf{always}: sort individuals in the population by fitnesses and select the top $K$ actors as parents; then add the target actor to the parent set with the order according to its fitness.
    \item \textbf{always-first}: sort individuals in the population by fitnesses and select the top $K$ actors as parents; then add the target actor to the parent set as the first place, regardless of its real fitness.
\end{itemize} 
The three strategies control the preference of the population towards the target actor, thus indirectly determining the off-policy degree of the population. The order of constraint degrees from weak to strong is \emph{normal}, \emph{always}, and \emph{always-first}. And in comparison to \emph{ERL-GA}, the frequency of the transition from RL to EA is increased to every iteration, allowing the ES optimization to fully utilize the weights of the target actor.


\subsection{Empirical Analyses}
\label{sec:empirical_analyses}

For the test environments, although the ERL framework was initially designed for tasks with sparse reward signals, it was frequently tested on non-sparse reward tasks, such as the Mujoco locomotion tasks \cite{todorovMuJoCoPhysicsEngine2012} -- a standard benchmark for continuous environments in OpenAI Gym. Following these works, our experiments also focus on these Mujoco tasks.

We compare the tailored ERL framework with the TD3 algorithm \emph{no-pop}. To quantify the degree of distribution mismatch, we calculate the action discrepancies between the target actor and population actors. 
Since $\max_{s \in \mathcal{S}} \| \mu(s) - b(s) \|_2$ is practically untraceable, we compute the mean square error of the output actions under the states of the trajectory $\tau_{b_i}$ from an individual $b_i$ as the average action discrepancy between $\mu$ and $b_i$: 
\begin{equation}
    \delta(\mu, b_i) = \frac{1}{|\tau_{b_i}|} \sum_t \left[\mu(s_t)-b_i(s_t)\right]^2,
\end{equation}
representing the proximity of experience distribution between the target actor and the population. We also record the performance of an actor based on the weights of the population distribution mean as a measurement of the EA part in the ERL framework. Other implementation details remain the same as those in Section~\ref{sec:experiments}.

\begin{figure}[b]
    \centering
    \includegraphics[width=0.98\linewidth]{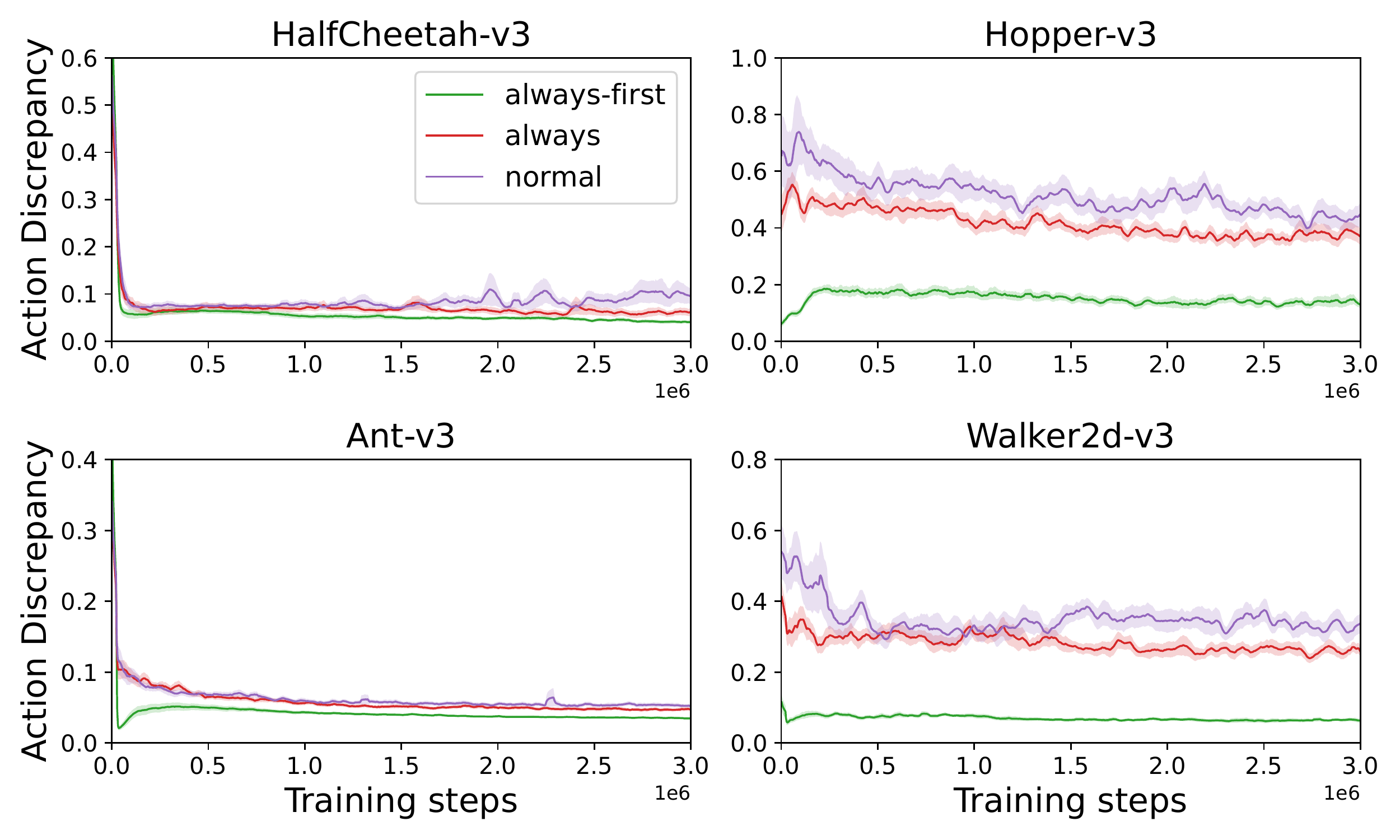}
	\caption{The average action discrepancy between the target actor and population actors in the tailored ERL framework.}
    \label{fig:ERL_similarity}
\end{figure}

The average return is displayed in Fig.~\ref{fig:erl_results}, and Fig.~\ref{fig:ERL_similarity} presents the average action discrepancy values across the population during training. By combining these results, we observe that the performance of the population mean consistently surpasses that of its target actor, indicating a successful information transfer from RL to EA.
In HalfCheetah-v3 and Ant-v3, the target actor achieves comparable performance to the population mean, where the action discrepancy is small across the three strategies. In contrast, although the performance of the population mean is high in Hopper-v3 and Walker2d-v3, implying the generation of high-return trajectories, the target actor struggles to effectively learn from these population data to achieve comparable performance when the action discrepancy is significant.

Moreover, the results of \emph{always} and \emph{normal} from Walker2d-v3 demonstrate that, although the population generates experiences with higher returns, the target actor of ERL performs worse; as the action discrepancy increases, the target actor undertakes more performance deterioration and becomes increasingly unstable during training. This evidence provides a counterexample and validates our argument that, under the trade-off between the benefits and errors of population data, these deviated data could hinder the target actor's learning via off-policy RL.

\section{Vanilla Remedy Method}

In Section~\ref{sec:effect_of_pop_data}, we discovered that using population experiences to assist the target actor's learning via the off-policy RL algorithm can be problematic within the ERL framework. Despite high returns, the presence of deviated off-policy population experiences in the replay buffer introduces errors in off-policy updates that may outweigh their benefits, leading to unstable training and reduced performance.


In the ERL framework, the replay buffer stores experiences from all actors, including near-on-policy experiences from previous target actors and off-policy experiences from population actors. Off-policy RL algorithms use uniformly sampled data from the replay buffer to update the target actor and its critic. However, as the population size increases, the proportion of near-on-policy data from the target actor in the replay buffer decreases. To alleviate the error caused by the distribution mismatch, it is essential to incorporate more on-policy data from the target actor into the updates.

To ensure efficient learning in the ERL framework, we implement two distinct replay buffers for experiences from the population and the target actor. During the sampling process, a specified percentage of data from the target actor's replay buffer is remixed. For each off-policy update, we sample a proportion $m \in (0,1)$ of experiences from the target replay buffer $\mathcal{D}_{\mu}$ and the remaining $(1-m)$ proportion from the population replay buffer $\mathcal{D}_{\text{pop}}$. These experiences are combined to form the final batch, which serves as input for the update. Consequently, the distribution of the final batch becomes
\begin{equation}
	\label{eq:hybrid_dist}
	\hat{d}(s,a) = m \cdot d_{\mathcal{D}_{\mu}}(s,a) + (1-m) \cdot d_{\mathcal{D}_{\text{pop}}}(s,a).
\end{equation}



\textbf{Correction on Actor}:
Proposition~\ref{prop:mixed_policy_gradient} shows that off-policy population experiences introduce a regularization term in the policy gradient, modifying the objective. When the on-policy ratio $m$ increases, the weights of the regularized term diminish, reducing its effect. When $m=1$, the regularized term from the population experiences will exert no influence on the target actor's updates.

\textbf{Correction on Critic}:
The theorem from \cite{sinhaExperienceReplayLikelihoodfree2022} establishes that the Bellman operator $\mathcal{B}^\mu$ is only a contraction under the distance metric $\| \cdot \|_d$ with the on-policy distribution $d_\mu$, where the distance metric between Q functions under distribution $d$ is defined as $\| Q - Q' \|_d^2 \doteq \mathbb{E}_{(s,a) \sim d} \left[ Q(s,a) - Q'(s,a) \right]^2$, as in the critic update of \eqref{eq:offpolicy_q_update}. Any other training state-action distribution could negatively affect the critic's convergence speed and exacerbate the subsequent actor's learning. As $m$ approaches 1, the mixed distribution $\hat{d}(s,a)$ in \eqref{eq:hybrid_dist} converges to the near-on-policy distribution $d_{\mathcal{D}{\mu}}(s,a)$, which, aided by target network smoothing \cite{lillicrapContinuousControlDeep2019,fujimotoAddressingFunctionApproximation2018}, closely approximates the true on-policy distribution $d_\mu$. In other words, increasing $m$ can help stabilize the learning of the target critic.


Our design choice of using two separate replay buffers offers several advantages. One alternative approach could be to increase the number of rollout workers for the target actor. However, our method reduces the computational cost, and using a fixed ratio in every update prevents fluctuations in the experience distribution due to varying lengths of trajectories from different actors.


\section{Experiments}
\label{sec:experiments}


Based on the results from previous experiments (Section~\ref{sec:empirical_analyses}), we select our tailored ERL framework with the \emph{always} strategy and test it with the remedy method on the same Mujoco locomotion tasks. In order to understand the influence of the target actor data percentage, we experiment with $m$ values of 0.1, 0.25, 0.5, and 0.75, comparing them to the corresponding RL method \emph{no-pop}. Additionally, we compare our ERL method to the parameter-noise method \cite{plappertParameterSpaceNoise2022}, denoted as \emph{param-noise}, which can be regarded as an ERL algorithm with a special Evolution Strategy (ES) algorithm, in which the population mean remains the target actor and does not evolve through the individuals generated from Gaussian noise.

\begin{table*}
	\centering
    \resizebox{\linewidth}{!}{
      \begin{tabular}{*{5}{c}}
         \toprule
          & HalfCheetah-v3 & Hopper-v3 & Ant-v3 & Walker2d-v3 \\ \midrule
         no-pop          & \pmb{$15157.0 \pm 242.9$}                       & $3236.3 \pm 283.0$                     & $5751.2 \pm 216.0$                      & $5324.5 \pm 118.3$                      \\
         param-noise     & $14047.8 \pm 407.9$                       & $3726.1 \pm 22.4$                      & $6014.1 \pm 204.6$                      & $5074.6 \pm 102.7$                      \\
         ERL(m=0.1)  & $13870.7 \pm 465.9$ ($13895.9 \pm 509.8$) & $3373.8 \pm 186.2$ ($3847.9 \pm 73.2$) & $6693.0 \pm 160.3$ ($6677.3 \pm 150.1$) & $4988.1 \pm 163.7$ ($5786.6 \pm 158.9$) \\
         ERL(m=0.25) & $14691.3 \pm 289.2$ ($14649.4 \pm 273.5$) & $3109.9 \pm 277.7$ (\pmb{$3968.1 \pm 22.9$}) & \pmb{$6700.2 \pm 65.1$} (\pmb{$6737.5 \pm 51.6$})   & \pmb{$5534.2 \pm 170.4$} ($6011.5 \pm 201.7$) \\
         ERL(m=0.5)  & $14511.6 \pm 531.1$ ($14506.3 \pm 569.0$) & $3671.4 \pm 55.2$ ($3950.3 \pm 33.0$)  & $5716.3 \pm 264.5$ ($5754.3 \pm 247.8$) & $5528.2 \pm 110.3$ (\pmb{$6149.6 \pm 122.3$}) \\
         ERL(m=0.75) & $13921.5 \pm 327.1$ ($13886.2 \pm 373.7$) & \pmb{$3764.8 \pm 33.3$} ($3959.6 \pm 25.8$)  & $5887.7 \pm 366.4$ ($5876.7 \pm 366.2$) & $5495.4 \pm 109.9$ ($5862.6 \pm 148.7$) \\
         \bottomrule
       \end{tabular}
    }
    \caption{The final performance of no-pop, param-noise and our tailored ERL with the remedy method. The max average return of the last 100 evaluations with the 68\% confidence interval over 10 trials is reported. For ERL algorithms, the performance of the population mean is also reported in brackets. The maximum performance of each task is bolded.
    }
	\label{table:2buffer_results}
 \end{table*}

\textbf{Implementation}:
For our Mujoco experiments, we employ the uniform training pattern described above. We utilize TD3 \cite{fujimotoAddressingFunctionApproximation2018} as our off-policy RL algorithm. Our population size is set to 10, as same as previous works \cite{khadkaEvolutionGuidedPolicyGradient2018,bodnarProximalDistilledEvolutionary2020,pourchotCEMRLCombiningEvolutionary2019}. The sizes of $\mathcal{D}_{\mu}$ and $\mathcal{D}_{\text{pop}}$ are 500,000, while the size of the shared replay buffer for other algorithms is 1,000,000. All tasks are trained for 3 million training steps. Comprehensive training settings can be found in Appendix~\ref{appendix:train}.

\textbf{Metrics}:
Our results are derived from 10 independent trials with different random seeds. For the target actor, we measure its average return across 10 independent episodes and report its 68\% confidence interval over these trials, based on the t-test. The evaluation is conducted every two iterations without exploration noise. Additionally, we record the metrics of the population mean in ERL methods using the same evaluation approach and depict these results with dashed lines. Other metrics are reported at each iteration. The figures are plotted with smoothing.


\subsection{Results}

\begin{figure}[t]
   \centering 
   \begin{subfigure}{0.49\linewidth}
      \centering
      \includegraphics[width=\linewidth]{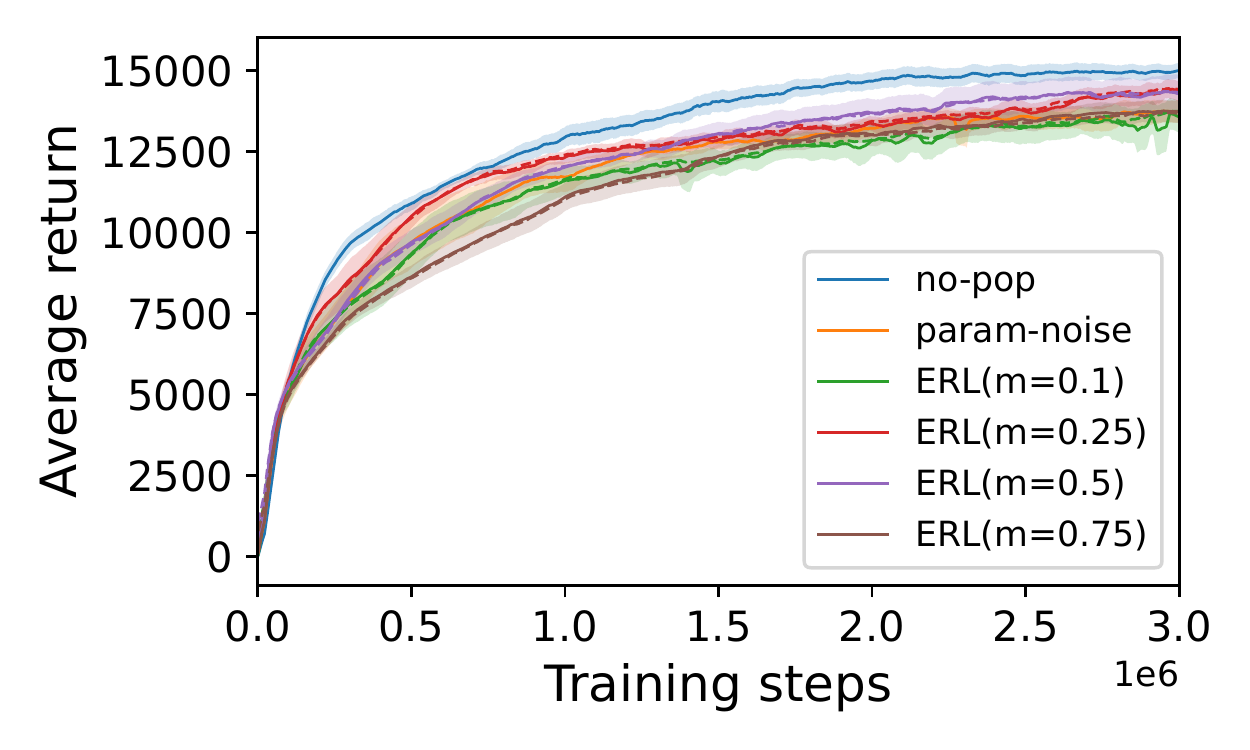} 
      \caption{HalfCheetah-v3}
   \end{subfigure}
   \begin{subfigure}{0.49\linewidth}
      \centering 
      \includegraphics[width=\linewidth]{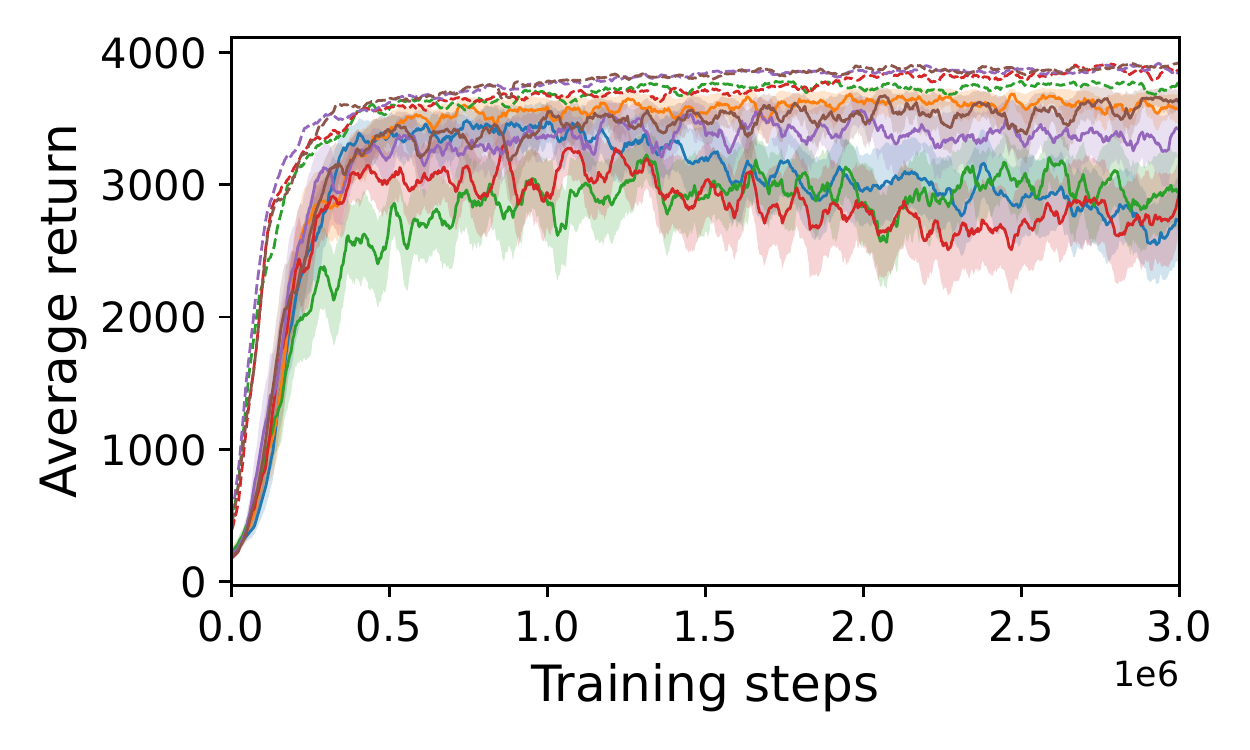} 
      \caption{Hopper-v3}
   \end{subfigure}
   \begin{subfigure}{0.49\linewidth}
       \centering 
       \includegraphics[width=\linewidth]{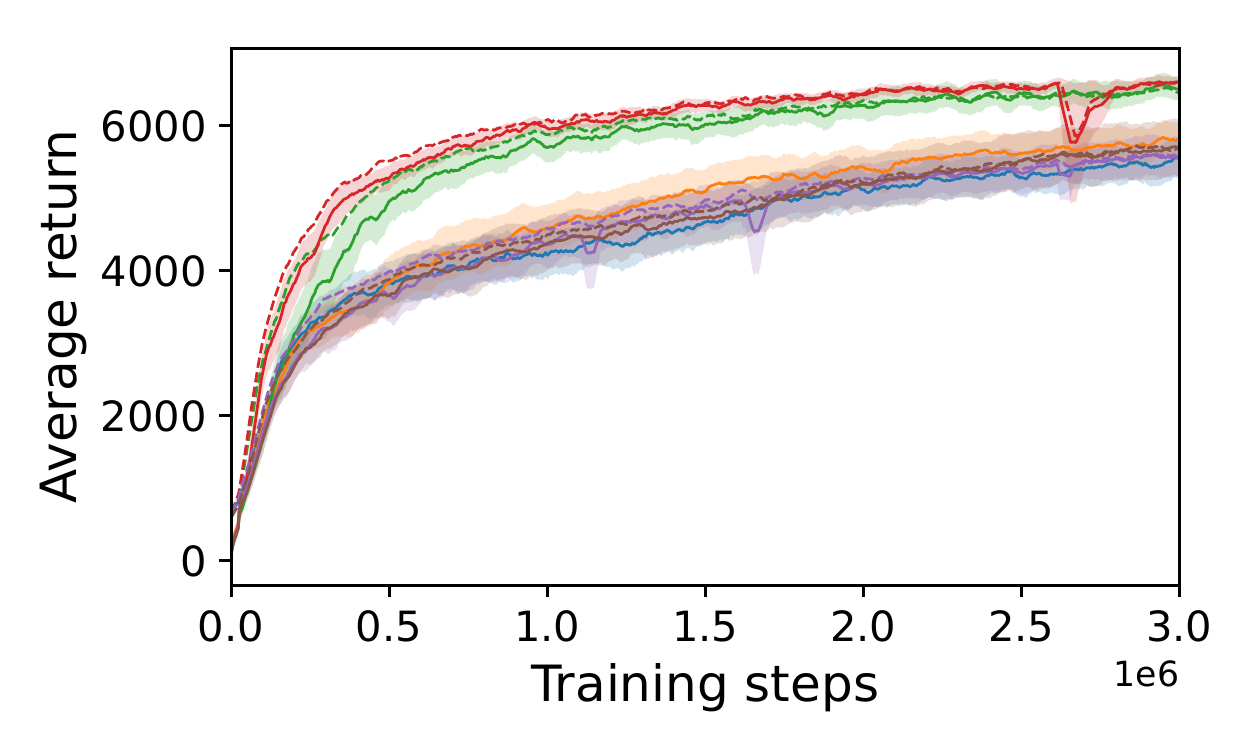} 
       \caption{Ant-v3}
    \end{subfigure}
    \begin{subfigure}{0.49\linewidth}
       \centering 
       \includegraphics[width=\linewidth]{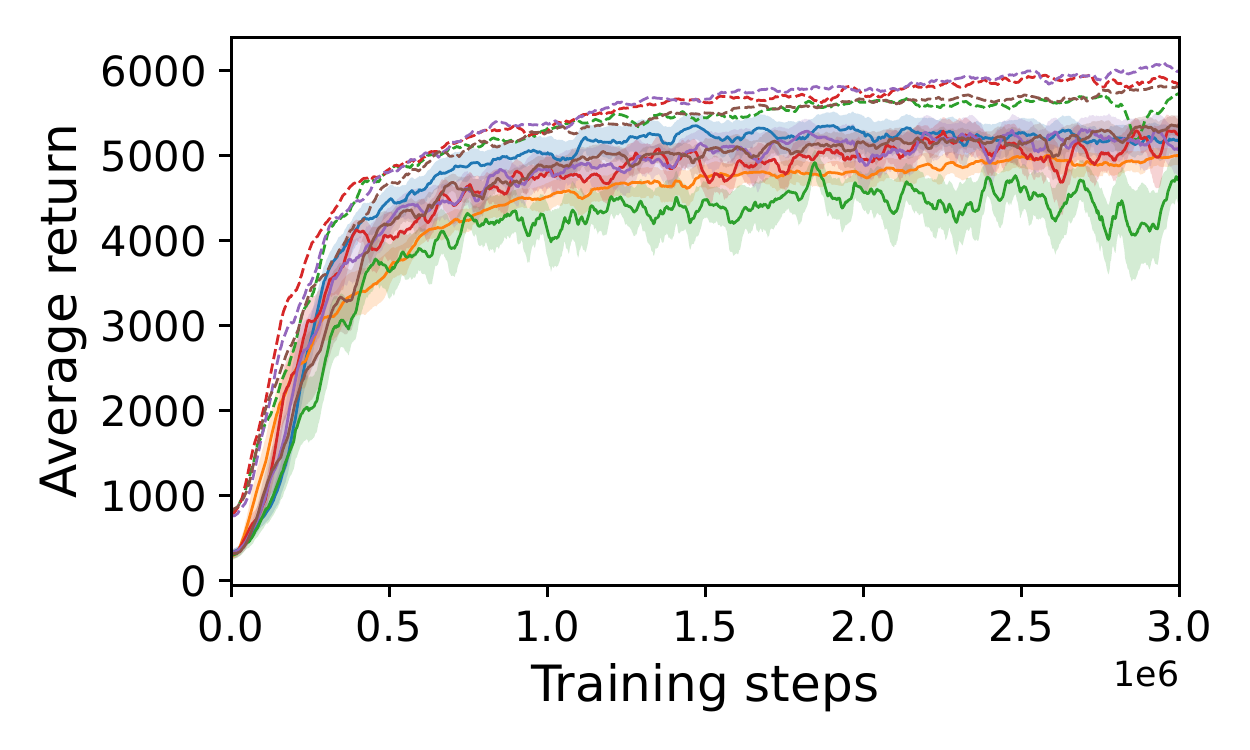} 
       \caption{Walker2d-v3}
    \end{subfigure}
   \caption{Learning curves of no-pop, param-noise and our tailored ERL algorithm with the remedy method on Mujoco tasks. The average return of the population mean in ERL is drawn by dash lines with the same color.}
   \label{fig:erl_2buffer_results}
\end{figure}

The training results are illustrated in Fig.~\ref{fig:erl_2buffer_results}, and the final performance is summarized in Table~\ref{table:2buffer_results}. Section~\ref{sec:empirical_analyses} has revealed that the deviation of population data significantly impacts the performance of the target actor in Hopper-v3 and Walker2d-v3. Our results indicate that increasing the near-on-policy ratio mitigates the negative effects of off-policy population data, resulting in enhanced performance and stability of the target actor's learning. Furthermore, there is a trade-off regarding the population data ratio. For the Ant-v3 task, the results demonstrate that ERL methods surpass \emph{param-noise} and \emph{no-pop} on the target actor when $m$ is small, suggesting that this task benefits from a more expansive exploration strategy with the aid of the population. And our method enables more flexible control over exploration derived from the population. While in HalfCheetah-v3, \emph{no-pop} performs the best among all methods. We hypothesize that broad exploration may not be advantageous in this task, and the Gaussian actors from the population cannot provide more informative data than the target actor, thus hindering the learning of the target actor. And due to the total training steps constraint, ERL algorithms may not converge yet in HalfCheetah-v3.

These results also contribute to a more profound understanding of the population's role in ERL. In Ant-v3, the performance of ERL methods suggests their superiority over traditional parameter-based exploitation methods, as they prioritize high-return areas during exploration. The performance between the population mean and the target actor in Hopper-v3 indicates that, occasionally, the population is more stable than the target actor, providing redundant experiences with high returns for off-policy RL after the target actor temporarily deteriorates. In Hopper-v3 and Walker2d-v3, the population mean exhibits consistently improved performance compared to the target actor, implying that even a simple population-based approach can effectively facilitate policy search with the target actor's assistance. Consequently, we propose using the final population mean as an alternative candidate solution.


\section{Conclusion}

By examining population-assisted off-policy RL methods, we identified a previously overlooked issue: common off-policy RL methods struggle to manage deviated off-policy population experiences generated from evolutionary iterations. Our empirical analyses highlight that even with high-quality population-based methods, the information may not be effectively transferred to the off-policy RL methods via the shared replay buffer. We also stress that our findings apply not only to off-policy deterministic actor-critic algorithms discussed in the paper but also to other approximate off-policy RL methods \cite{tangGuidingEvolutionaryStrategies2021,nguyenCombiningSoftActorCritic2022}, as the mismatched data between the RL policy and the population leads to subtle errors.

To address these errors, we propose a vanilla remedy method involving the integration of more near-on-policy data from the RL policy into the off-policy updates. Our work emphasizes the critical role that on-policy data plays in population-assisted RL methods and highlights the relationship between population data and RL policy data. 
Our solution does not alter the learning pattern of the off-policy RL method, however, it exhibits limitations in learning thoroughly from the population for certain tasks. Consequently, we advise developing patterns that automatically adjust the on-policy data ratio in updates and exploring new off-policy RL methods capable of more effectively handling population experiences. Meanwhile, the population-based part should consider the off-policy degree towards the RL policy and avoid generating substantially deviated off-policy data.

\begin{acks}
  This work was supported by the Program for Guangdong Introducing Innovative and Entrepreneurial Teams (Grant No. 2017ZT07X386).
\end{acks}

\bibliographystyle{ACM-Reference-Format}
\bibliography{main}

\appendix

\onecolumn

\counterwithin{equation}{section}

\section{Proof of the Proposition~\ref{prop:mixed_policy_gradient}}
\label{appendix:props}

\begin{proposition}
	Mixing off-policy data into the policy gradient with the ratio $\alpha$ will changes the deterministic policy gradient from $\mathbb{E}_{s \sim d_\mu} \left[ \nabla_\theta Q^\mu(s, \mu_{\theta}(s))  \right]$ to $\mathbb{E}_{s \sim d_\mu} \left[ \nabla_\theta Q^\alpha(s,\mu_{\theta}(s)) \right]$, where $\rho(s)=\frac{d_b(s)}{d_\mu(s)}$ and
    \begin{equation}
        Q^\alpha(s,a)=Q^\mu(s,a)+\alpha(\rho(s)-1)Q^\mu(s,a).
    \end{equation}
\end{proposition}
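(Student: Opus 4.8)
The plan is to express the mixed gradient as a convex combination of the pure on-policy and pure off-policy deterministic policy gradients, then apply an importance-sampling change of measure to fold everything back under the single state distribution $d_\mu$, and finally recognize the resulting scalar reweighting as a modification of the action-value function. Throughout I use the DPG identity $\nabla_\theta \mu_\theta(s)\,\nabla_a Q^\mu(s,a)|_{a=\mu_\theta(s)} = \nabla_\theta Q^\mu(s,\mu_\theta(s))$, where the gradient acts only through the action argument.

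First I would write down the mixed gradient. Interpreting ``mixing off-policy data with ratio $\alpha$'' as blending the on-policy gradient of \eqref{eq:onpolicy_dpg} with the off-policy approximation of \eqref{eq:offpolicy_dpg},
\[
g_\alpha = (1-\alpha)\,\mathbb{E}_{s \sim d_\mu}\!\left[ \nabla_\theta Q^\mu(s,\mu_\theta(s)) \right] + \alpha\,\mathbb{E}_{s \sim d_b}\!\left[ \nabla_\theta Q^\mu(s,\mu_\theta(s)) \right].
\]
Next I would rewrite the off-policy term as an expectation over $d_\mu$ by importance sampling, introducing $\rho(s) = d_b(s)/d_\mu(s)$:
\[
\mathbb{E}_{s \sim d_b}\!\left[ \nabla_\theta Q^\mu(s,\mu_\theta(s)) \right] = \mathbb{E}_{s \sim d_\mu}\!\left[ \rho(s)\,\nabla_\theta Q^\mu(s,\mu_\theta(s)) \right].
\]
Substituting and collecting the scalar coefficients under a common expectation gives
\[
g_\alpha = \mathbb{E}_{s \sim d_\mu}\!\left[ \bigl(1 + \alpha(\rho(s)-1)\bigr)\,\nabla_\theta Q^\mu(s,\mu_\theta(s)) \right].
\]

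Finally I would absorb the scalar factor into the value function. Since $\rho(s)$ depends on $s$ only through the two state-visitation distributions and carries no dependence on the policy parameters $\theta$ in the stochastic-gradient computation, it passes through $\nabla_\theta$ as a constant, so that $\nabla_\theta\bigl[(1+\alpha(\rho(s)-1))\,Q^\mu(s,\mu_\theta(s))\bigr] = (1+\alpha(\rho(s)-1))\,\nabla_\theta Q^\mu(s,\mu_\theta(s))$. Recognizing the bracketed quantity as $Q^\alpha(s,a) = Q^\mu(s,a) + \alpha(\rho(s)-1)Q^\mu(s,a)$ then yields $g_\alpha = \mathbb{E}_{s \sim d_\mu}\!\left[ \nabla_\theta Q^\alpha(s,\mu_\theta(s)) \right]$, as claimed.

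The main obstacle is the last step: justifying that $\rho(s)$ may be treated as $\theta$-independent inside the gradient. This is precisely the standard DPG stationarity approximation already invoked in passing from \eqref{eq:onpolicy_dpg} to \eqref{eq:offpolicy_dpg} — the state-visitation distributions are held fixed when computing the policy gradient — so the factor $(\rho(s)-1)$ commutes with $\nabla_\theta$ and the rewriting is exact under that same approximation. I would flag this assumption explicitly rather than claim the identity holds for the true full gradient, since in general $d_\mu$ itself depends on $\theta$.
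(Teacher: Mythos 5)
Your proposal is correct and follows essentially the same route as the paper's proof: the paper writes the mixed gradient as an expectation over the mixture distribution $\hat{d}(s)=(1-\alpha)d_\mu(s)+\alpha d_b(s)$, which by linearity is exactly your convex combination of the on- and off-policy gradients, and then performs the same change of measure to $d_\mu$ and absorbs the scalar $1+\alpha(\rho(s)-1)$ into the action-value function. Your explicit remark that $\rho(s)$ commutes with the gradient (since the gradient acts only through the action argument, under the standard DPG stationarity approximation) is a point the paper leaves implicit, but it does not change the argument.
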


\begin{proof}
	We start with the hybrid data distribution $\hat{d}(s)= (1-\alpha) d_\mu(s) + \alpha d_b(s)$ after mixing, where $d_\mu(s)$ and $d_b(s)$ are the marginal state distribution under $\mu$ and $b$ respectively. Then the off-policy policy gradient under $\hat{d}(s)$ can be written as
    \begin{equation}
        \begin{aligned}
            \nabla J_{\hat{d}}(\theta) = & \sum_s \hat{d}(s) \nabla_\theta Q^\mu(s,\mu_{\theta}(s)) \\
            = & \sum_s  [(1-\alpha) d_\mu(s) + \alpha d_b(s)] \nabla_\theta \mu_\theta(s) \nabla_a Q^\mu(s,a)|_{a=\mu_\theta(s)} \\
            = & \sum_s d_\mu(s) \left[  (1-\alpha)+\alpha \frac{d_b(s)}{d_\mu(s)}\right] \nabla_\theta \mu_\theta(s) \nabla_a Q^\mu(s,a)|_{a=\mu_\theta(s)} \\
            = & \sum_s d_\mu(s) \nabla_\theta \mu_\theta(s) \nabla_a \left[(1-\alpha)+\alpha \frac{d_b(s)}{d_\mu(s)}\right] Q^\mu(s,a)|_{a=\mu_\theta(s)} \\
            = & \sum_s d_\mu(s) \nabla_\theta \mu_\theta(s) \nabla_a \left[1+\alpha \frac{d_b(s)-d_\mu(s)}{d_\mu(s)}\right] Q^\mu(s,a)|_{a=\mu_\theta(s)} \\
            = & \sum_s d_\mu(s) \nabla_\theta \mu_\theta(s) \nabla_a \left[Q^\mu(s,a) + \alpha(\frac{d_b(s)}{d_\mu(s)}-1)Q^\mu(s,a)\right] _{a=\mu_\theta(s)} \\
            = & \mathbb{E}_{s \sim d_\mu}  \left[ \nabla_\theta Q^\alpha(s,\mu_{\theta}(s)) \right]
        \end{aligned}
    \end{equation}
    where $Q^\alpha(s,a)=Q^\mu(s,a)+\alpha(\rho(s)-1)Q^\mu(s,a)$.
\end{proof}



\section{Discussion about the Original GA Method}
\label{appendix:ga}
In Section~\ref{sec:tailored_ERL}, we demonstrated that the Genetic Algorithm (GA) employed in the original ERL could perform worse than the corresponding RL algorithm. This appears to contradict the conclusion in the original ERL \cite{khadkaEvolutionGuidedPolicyGradient2018}. In this section, we will discuss the potential reasons for this discrepancy.

In the original work, the ERL framework utilized DDPG as its off-policy RL method and was compared with the pure DDPG method.  In contrast, our experiments replace DDPG with TD3, which addresses the critic overestimation issue and incorporates additional techniques such as Target Policy Smoothing Regularization. Consequently, the benefits derived from the GA population could be trivial compared to the improvement in the off-policy RL algorithm.

Moreover, we evaluate \emph{ERL-GA} using our proposed uniform training design instead of following the original training strategy. We align the RL algorithm and ERL algorithm with similar sampling and update intervals for fair comparisons, which is missing in the original work's comparisons. Different exploration and exploitation balance strategies could also affect the performance of off-policy RL methods. Additionally, a side effect of our training design is that we allow more frequent evolutions and relax the total number of evolution iterations from hundreds to thousands, which could also influence the results of the original GA method.

The motivation for replacing the original GA method with our Estimation-of-Distribution Algorithm(EDA)-style Evolution Strategy(ES) consists of five points, as mentioned in Section~\ref{sec:tailored_ERL}. First, we want all population trajectories to yield high returns, allowing us to eliminate the factor from low-quality population data, which may lead the off-policy RL method to perform worse. Second, the original GA method contains complex procedures and excessive hyperparameters, some of which are task-dependent, which may introduce irrelevant interference. Third, as discussed in \cite{bodnarProximalDistilledEvolutionary2020}, the original GA method uses traditional heuristic operators, which may not be suitable for neural networks and could generate inferior offspring. Fourth, compared to EDA-style ES, GA methods are more sensitive to the noisy evaluation of individuals at each iteration. Lastly, our ES method enables us to derive three update strategy variants that control the distribution mismatch degree between the target actor and population actors, allowing us to qualitatively analyze the effect of the population data in different scenarios.


\section{Training Setup}
\label{appendix:train}

We attempt to make our experiments transparent. In this section, we go into detail about the training setup in our experiments.

For the TD3 algorithm and actor and critic structures, we follow the official code of TD3\footnote{\url{https://github.com/sfujim/TD3}}. Besides, we add fixed layer-normalization layers before each activation function in the actor model. This helps to improve the effectiveness of the noise-based operators in the evolution \cite{plappertParameterSpaceNoise2022}. The model structures and hyperparameters of TD3 are listed in Table~\ref{table:model_struct} and Table~\ref{table:td3_hp} respectively. We use these hyperparameters for all experiments in the paper.

For the tailored ERL algorithm, our population size is 10 and the standard deviation of the Gaussian noise used in the ES is $\sigma=0.01$, with $K=5$. The same Gaussian noise is used in \emph{param-noise} with a fixed standard deviation $\sigma$ of 0.01. The size of $\mathcal{D}_{\mu}$ and $\mathcal{D}_{\text{pop}}$ is 500,000 and the size of the shared replay buffer in other algorithms is 1,000,000. For \emph{ERL-GA} in Section~\ref{sec:tailored_ERL}, we use TD3 with the same hyperparameters mentioned above. And the period of weights injection from RL to EA (Lamarckian transfer) is 10 and 1 for HalfCheetah-v3 and Ant-v3 respectively. Since ERL-GA uses different \emph{evaluation episodes for fitness} and \emph{elite fraction} settings on Hopper and Walker2d tasks, we did not test these two tasks for fair comparisons. The hyperparameters are summarized in Table~\ref{table:ea_hp}. All tasks are trained for 3 million training steps of the target actor and repeated 10 times with different random seeds.

\begin{table}[h]
    \caption{Hyperparameters}
    \begin{subtable}[t]{0.49\linewidth}
        \centering
        \caption{Agent Networks}
        \begin{tabular}[t]{cc}
            \toprule
            Hyperparameter & Value \\ \midrule 
            Actor network & FC(256,256)  \\ 
            Actor activate function & ReLU \\ 
            Actor output function & Tanh \\
            Actor layer normalization & True \\  
            Critic network & FC(256,256)  \\
            Critic activation function & ReLU \\ 
            Critic layer normalization & False \\  \bottomrule
        \end{tabular}
        \label{table:model_struct}
    \end{subtable}
    \begin{subtable}[t]{0.49\linewidth}
        \centering
        \caption{TD3 Training Hyperparameters}
        \begin{tabular}[t]{cc}
            \toprule
            Hyperparameter & Value \\ \midrule
            Actor optimizer & Adam  \\
            Actor learning rate & $3 \cdot 10^{-4}$  \\
            Actor regularization & None \\ 
            Actor delayed update frequency & 2 \\ 
            Critic optimizer & Adam  \\ 
            Critic learning rate & $3 \cdot 10^{-4}$  \\
            Critic regularization & None \\ 
            Batch size & 256 \\ 
            Discount factor & 0.99 \\
            Exploration strategy & $\mathcal{N}(0,0.1)$ \\
            Target network update rate & $5 \cdot 10^{-3}$ \\ 
            Normalize observation & False \\ \bottomrule
        \end{tabular}
        \label{table:td3_hp}
    \end{subtable}
    \begin{subtable}[t]{0.49\linewidth}
        \centering
        \caption{ERL Hyperparameters}
        \begin{tabular}[t]{cc}
            \toprule
            Hyperparameter & Value \\ \midrule
            Population size $N$ & 10  \\
            Evaluation episodes for fitness & 1 \\
            Population standard deviation $\sigma$ (ES)  & 0.01  \\
            Parent size $K$ (ES) & 5 \\ 
            Mutation probability (GA) & 0.9 \\
            Mutation fraction (GA) & 0.1 \\
            Mutation strength (GA) & 0.1 \\
            Super mutation probability (GA) & 0.05 \\
            Reset mutation probability (GA) & 0.1 \\
            Elite fraction (GA) & 0.1 \\ 
            Lamarckian transfer period (GA) & HalfCheetah: 10, Ant: 1 \\ \bottomrule
        \end{tabular}
        \label{table:ea_hp}
    \end{subtable}
	\label{table:hp}
\end{table}

\end{document}